\documentclass{article} % For LaTeX2e
\usepackage[preprint]{colm2026_conference}
\usepackage{microtype}
\usepackage{hyperref}
\usepackage{url}
\usepackage{booktabs}
\usepackage{graphicx}
\usepackage{amsmath}
\usepackage{amssymb}
\usepackage{mathtools}
\usepackage{amsthm}
\usepackage{tikz}
\usetikzlibrary{positioning, arrows.meta, calc}

\usepackage{lineno}

\definecolor{darkblue}{rgb}{0, 0, 0.5}
\hypersetup{colorlinks=true, citecolor=darkblue, linkcolor=darkblue, urlcolor=darkblue}

\newtheorem{proposition}{Proposition}

\title{Trajectory-Aware Retrieval Agents for Temporal Decision-Making}

\author{Jing Wang \\
\texttt{hfutwj@gmail.com} \\
\And
Jie Shen \\
\texttt{jie.shen@stevens.edu} \\
\And
Xing Niu \\
\texttt{xingniu@amazon.com}
}

\begin{document}

\ifcolmsubmission
\linenumbers
\fi

\maketitle

\begin{abstract}
We study the problem of decision-making from long-form, temporally structured text using large language model (LLM) agents. Standard retrieval-augmented generation (RAG) pipelines fragment chronological context into isolated snippets, discarding the temporal structure that is often critical for correct downstream decisions. We introduce \textsc{TLM} (Trajectory Language Model), a closed-loop agentic framework that iteratively refines the evidence set using SHAP-guided feedback. The key technical contribution is the \emph{latent growth curve model} (LGCM) over retrieved chunk embeddings, which provides an interpretable mechanism for detecting trajectory trends, turning points, and information gaps. We show that, under a scorer-calibration assumption (which holds approximately in practice), the iterative refinement procedure is monotonically non-decreasing in the probability assigned to the correct label. Empirically, \textsc{TLM} is evaluated on three temporally grounded decision tasks: medical question answering, earnings call surprise prediction, and overnight stock gap prediction. \textsc{TLM} substantially outperforms both zero-shot LLM baselines and standard retrieval-augmented approaches on the medical task, and yields consistent, economically meaningful gains on the two financial tasks.
\end{abstract}
% that (1) reconstructs retrieved evidence into a latent trajectory via growth curve modeling in embedding space, (2) performs classification through a jointly trained LLM and lightweight attention-based scorer, and (3) ---enabling targeted re-retrieval that preserves temporal coherence
\section{Introduction}

Large language models (LLMs) have been successfully deployed as autonomous agents for multi-step reasoning \citep{yao2023react, schick2023toolformer, wang2024survey, park2023generative}. When the relevant information resides in long documents, retrieval-augmented generation (RAG) \citep{lewis2020retrieval, guu2020retrieval} has become the dominant paradigm: retrieve relevant text chunks, concatenate them into the LLM context, and generate an answer. This approach has been shown to reduce hallucination \citep{ref116, ayala2024reducing} and improve factual grounding \citep{asai2024selfrag}.

However, many high-stakes decision problems---such as clinical triage, financial forecasting, and epidemic surveillance---require reasoning over \emph{temporally ordered} evidence. A patient's lab values form a trajectory; an earnings call unfolds as a sequence of forward-looking statements; stock prices exhibit serial dependence. Standard RAG pipelines are structurally ill-suited for these settings. By retrieving chunks based solely on semantic similarity, they shatter the chronological structure of the source text. The resulting evidence set may contain temporally contradictory information or miss critical transitions, leading to unreliable downstream predictions.

A second, orthogonal limitation concerns the decision-making interface. Autoregressive LLMs produce token sequences, not bounded classification decisions. When forced to output a discrete label (e.g., a multiple-choice answer or a buy/sell signal), zero-shot LLM generation exhibits high variance and is sensitive to prompt formatting \citep{ziems2024can}. This motivates replacing open-ended generation with a structured classification head that operates over the retrieved evidence.

In this work, we address both limitations simultaneously. Our key insight is that the statistical methodology for modeling temporal change already exists, but in a different field. Latent growth curve models (LGCMs) \citep{meredith1990latent, bollen2006latent} have been a cornerstone of longitudinal data analysis in psychology, education, and epidemiology for decades, providing principled tools for estimating baseline states, rates of change, and individual deviations from expected trajectories. Similarly, Shapley value decompositions \citep{shapley1953value, lundberg2017unified} offer a game-theoretic framework for attributing contributions of individual features to a prediction. These methods are well-understood, theoretically grounded, and widely deployed in their respective domains. Yet they have not been integrated into the LLM agent pipeline, where retrieved evidence is still treated as an unordered bag of text. We bridge this gap by embedding these classical statistical tools directly into the retrieval-augmented generation loop. We introduce \textsc{TLM} (Trajectory Language Model), a trajectory-aware retrieval agent for temporal decision-making.
\paragraph{Contributions.} Our main contributions are as follows.
\begin{enumerate}
	\item \textbf{Latent growth curve modeling of retrieved evidence.} We propose fitting a latent growth curve model (LGCM) to the embedding trajectory of retrieved chunks. This provides a principled decomposition of the evidence into a baseline state, a trend, and residuals. From this decomposition we derive three actionable signals: trend classification (stable, progressive, or changing), turning point detection via residual analysis, and trajectory gap detection that triggers targeted re-retrieval. To our knowledge, this is the first application of growth curve modeling to RAG evidence trajectories.

	\item \textbf{Joint LLM-scorer architecture with SHAP-guided refinement.} We introduce a lightweight attention-based scorer that is trained jointly with the LLM classifier. This scorer enables efficient leave-one-out SHAP \citep{lundberg2017unified} value estimation using $N$ cheap forward passes through the small scorer network (each ${\sim}10$\,ms) in place of $N$ expensive LLM forward passes, making iterative evidence refinement practical. The refinement is \emph{additive only}: important chunks seed learned queries for retrieving additional evidence, but no previously retrieved chunk is ever discarded. We show that under a scorer-calibration assumption, each refinement iteration is monotonically non-decreasing in the probability of the correct label (Proposition~\ref{prop:monotone}).

	\item \textbf{Evaluation across three temporal decision domains.} We evaluate on medical question answering (MedQA, 5-way classification), earnings call surprise prediction (binary), and overnight stock gap prediction (3-way). \textsc{TLM} achieves 64.2\% on MedQA (vs.\ 34.6\% for zero-shot LLM), 58.0\% on earnings prediction (vs.\ 50.7\%), and 38.3\% on gap prediction with \$20,400 final capital (vs.\ \$12,067 for the baseline), demonstrating the generality of the approach.
\end{enumerate}

\section{Related Work}

\paragraph{Retrieval-augmented generation.}
RAG was introduced by \citet{lewis2020retrieval} to ground LLM generation in external knowledge, and has since been extended along several axes: scaling retrieval corpora to trillions of tokens \citep{ref6}, distilling retrieval knowledge into reader models \citep{ref49}, and reducing hallucination through retrieval grounding \citep{ref116, ayala2024reducing}. Recent surveys \citep{fan2024ragmeetllms, zhao2024ragsurvey, gao2023retrieval} comprehensively catalogue these advances. Self-RAG \citep{asai2024selfrag} adds a self-reflection mechanism that lets the model critique its own retrieval quality and decide when to retrieve again. However, all of these systems treat retrieved chunks as an \emph{unordered set}: they score chunks by semantic relevance but discard the chronological structure of the source document. When the source contains temporally evolving information---a patient's worsening symptoms, a CEO's shifting forward guidance---this structural loss is not merely suboptimal but actively harmful, as it allows the LLM to condition on temporally contradictory evidence.

\paragraph{LLM agents.}
The agent paradigm extends LLMs from passive generators to active decision-makers that interact with environments. ReAct \citep{yao2023react} interleaves reasoning traces with tool-use actions; Toolformer \citep{schick2023toolformer} teaches LLMs to invoke APIs autonomously; LATS \citep{zhou2024language} unifies reasoning, acting, and planning via tree search. \citet{sumers2023cognitive} propose a cognitive architecture for language agents, and \citet{park2023generative} demonstrate emergent social behavior in multi-agent simulations. These frameworks are powerful but domain-agnostic: they provide general-purpose reasoning scaffolds without modeling the \emph{structure} of the evidence they operate on. In particular, none of them impose temporal inductive biases on retrieved context, even when the downstream task is inherently temporal.

\paragraph{Temporal reasoning in NLP.}
Temporal information extraction has a long history, from rule-based systems such as HeidelTime \citep{strotgen2010heideltime} to neural approaches for temporal relation classification \citep{ning2018multi}. More recently, LLM-based methods have been applied to clinical timeline extraction \citep{wang2025large} and large-scale temporal clinical time-series construction \citep{wang2025mimic}. In the biomedical domain, \citet{jin2024matching} use LLMs for patient-to-trial matching, which requires reasoning over patient histories. However, these works focus on \emph{extraction}---producing structured temporal annotations---rather than using temporal structure to improve \emph{downstream decision-making}. Our work is complementary: we do not extract explicit timestamps but instead model the latent trajectory of retrieved evidence in embedding space.

\paragraph{Financial NLP.}
Predicting market reactions from textual data has been studied using sentiment analysis \citep{loughran2011liability}, earnings call transcripts \citep{koval2023forecasting, qin2019talk}, and news-based signals \citep{hu2018listening}. \citet{koval2023forecasting} predict earnings surprises from conference call transcripts using supervised models with handcrafted features, achieving modest improvements over sentiment baselines. The efficient market hypothesis \citep{fama1998market} provides a strong prior that public information is rapidly priced in, making text-based prediction inherently difficult. Our approach differs from prior financial NLP work in two ways: we use retrieval rather than processing the full transcript, and we model the trajectory of the earnings call narrative (management tone evolution, guidance shifts) via LGCM rather than treating the transcript as a bag of sentences.

\paragraph{Statistical trajectory methods.}
Latent growth curve models (LGCMs) \citep{meredith1990latent, mcardle1987latent} were developed in the structural equation modeling (SEM) literature to analyse longitudinal panel data. The core idea is to decompose repeated measurements into a latent intercept (baseline level) and latent slope (rate of change), estimated jointly via maximum likelihood or OLS \citep{bollen2006latent}. Extensions include piecewise-linear and nonlinear growth curves \citep{grimm2011nonlinear}, mixture models for heterogeneous trajectories \citep{muthen2000integrating}, and multilevel formulations for nested data \citep{raudenbush2002hierarchical}. LGCMs have been widely applied in developmental psychology \citep{duncan2013introduction}, educational testing, clinical trials, and epidemiology, where they are used to characterise disease progression, treatment response trajectories, and cognitive decline. Separately, Shapley values \citep{shapley1953value} from cooperative game theory have been adapted for feature attribution in machine learning via SHAP \citep{lundberg2017unified}, providing theoretically grounded importance scores that satisfy efficiency, symmetry, and marginality axioms. Despite their maturity and theoretical appeal, neither LGCMs nor SHAP-based refinement have been integrated into the LLM retrieval-augmented generation pipeline. Our work imports both tools into the RAG agent loop: we fit LGCMs to chunk embedding trajectories to detect temporal structure, and use SHAP values to iteratively refine the evidence set.

\begin{figure*}[t]
\centering
\begin{tikzpicture}[
    node distance=1.2cm and 1.6cm,
    box/.style={rectangle, draw=black!70, fill=#1, rounded corners=3pt,
                minimum height=0.9cm, minimum width=2.4cm, align=center,
                font=\small},
    box/.default=blue!8,
    io/.style={box=gray!12, dashed},
    arrow/.style={-{Stealth[length=2.5mm]}, thick, color=black!70},
    feedback/.style={-{Stealth[length=2.5mm]}, thick, color=red!60!black, dashed},
    label/.style={font=\scriptsize\itshape, color=black!60},
]

% Row 1: Input
\node[io] (doc) {Document $c$\\(vignette / transcript)};
\node[io, right=1.2cm of doc] (query) {Query $q$\\+ Options $\mathcal{O}$};

% Stage 1
\node[box=blue!10, below=1.0cm of doc, xshift=1.5cm] (retrieval) {
    \textbf{Stage 1}\\Hybrid Retrieval\\{\scriptsize BM25 + Dense}
};

% Stage 2
\node[box=green!10, right=1.8cm of retrieval] (lgcm) {
    \textbf{Stage 2}\\LGCM Trajectory\\{\scriptsize trend / gaps / turns}
};

% Stage 3
\node[box=orange!10, below=1.0cm of retrieval] (rerank) {
    \textbf{Stage 3}\\Learned Re-rank\\{\scriptsize + Learned Retrieval}
};

% Stage 4
\node[box=purple!10, right=1.8cm of rerank] (llm) {
    \textbf{Stage 4}\\LLM Classification\\{\scriptsize next-token scoring}
};

% Stage 5
\node[box=red!10, below=1.0cm of llm] (shap) {
    \textbf{Stage 5}\\SHAP Refinement\\{\scriptsize leave-one-out via scorer}
};

% Output
\node[io, right=1.8cm of llm] (pred) {Prediction $\hat{y}$};

% Arrows: input to retrieval
\draw[arrow] (doc.south) -- ++(0,-0.3) -| (retrieval.north);
\draw[arrow] (query.south) -- ++(0,-0.3) -| (retrieval.north);

% Retrieval to LGCM
\draw[arrow] (retrieval.east) -- node[above, label] {chunks $\mathcal{Z}$} (lgcm.west);

% Retrieval to Rerank
\draw[arrow] (retrieval.south) -- node[left, label] {$\mathcal{Z}^{(0)}$} (rerank.north);

% LGCM to LLM
\draw[arrow] (lgcm.south) -- ++(0,-0.35) -| node[near start, above, label] {summary} (llm.north);

% Rerank to LLM
\draw[arrow] (rerank.east) -- node[above, label] {reranked $\mathcal{Z}$} (llm.west);

% LLM to output
\draw[arrow] (llm.east) -- node[above, label] {} (pred.west);

% LLM to SHAP
\draw[arrow] (llm.south) -- node[right, label] {$P(y|\mathcal{Z})$} (shap.north);

% SHAP feedback to rerank (the loop)
\draw[feedback] (shap.west) -- ++(-0.6,0) -- node[left, label, text=red!60!black] {
    \begin{tabular}{c}new chunks\\(additive)\end{tabular}
} ++(0,1.0) -- (rerank.south -| shap.west) -- (rerank.south);

% Gap detection from LGCM back to retrieval
\draw[feedback] (lgcm.north) -- ++(0,0.4) -| node[near start, above, label, text=red!60!black] {gap retrieval} (retrieval.north east);

% Scorer training arrow
\node[font=\scriptsize, color=black!50, below=0.15cm of shap] {scorer $g_\theta$ trained jointly};

\end{tikzpicture}
\caption{The \textsc{TLM} pipeline. Solid arrows indicate the forward pass; dashed red arrows indicate feedback loops. Stage~1 retrieves chunks via hybrid (BM25 + dense) search. Stage~2 fits a latent growth curve model (LGCM) to the chunk embeddings, detecting trends, turning points, and gaps; detected gaps trigger targeted re-retrieval. Stage~3 re-ranks chunks using a jointly trained scorer and constructs learned queries for additional retrieval. Stage~4 classifies via next-token scoring over answer options. Stage~5 computes SHAP values via the lightweight scorer and retrieves additional chunks seeded by important evidence (additive only).}
\label{fig:pipeline}
\end{figure*}

\section{Problem Formulation}

Let $x = (c, q, \mathcal{O})$ denote a decision instance, where $c$ is a source document (clinical vignette, earnings transcript, or stock history), $q$ is a query, and $\mathcal{O} = \{o_1, \ldots, o_K\}$ is a set of $K$ candidate answers. The label is $y^* \in \mathcal{Y} = \{1, \ldots, K\}$.

We seek a predictor $f_\Theta : (c, q, \mathcal{O}) \to \hat{y}$ that maximizes classification accuracy. Because the source document $c$ may be very long (up to 150{,}000 characters for earnings transcripts), we do not condition the classifier directly on $c$. Instead, we construct a compact evidence set $\mathcal{Z}(c,q) = \{z_1, \ldots, z_N\}$ of retrieved chunks and predict from the evidence:
\begin{equation}\label{eq:objective}
\min_{\Theta}\;
\mathbb{E}_{(c,q,\mathcal{O},y^*) \sim \mathcal{D}}
\left[
-\log p_\Theta\!\left(y^* \mid \mathcal{Z}(c,q), q, \mathcal{O}\right)
\right],
\end{equation}
where $\Theta$ includes the LLM parameters (via LoRA adapters) and the parameters of a lightweight embedding-based scorer.

\paragraph{Notation.} We write $\phi(\cdot) \in \mathbb{R}^D$ for a pretrained sentence encoder and $\mathbf{e}_i = \phi(z_i)$ for the embedding of chunk $z_i$.

\section{Method}

Our framework, \textsc{TLM}, operates in five stages: (1)~hybrid retrieval, (2)~latent growth curve modeling, (3)~learned re-ranking and retrieval, (4)~LLM classification, and (5)~SHAP-guided iterative refinement.

\subsection{Stage 1: Hybrid Retrieval}

Given a document $c$, we partition it into overlapping word-level chunks $\{z_1, \ldots, z_M\}$ using a sliding window of size $w$ with overlap $\delta$. For a query $q$, each chunk is scored by combining sparse lexical matching and dense semantic similarity:
\[
\mathrm{score}(z_i, q)
=
\alpha \cdot \mathrm{BM25}(z_i, q)
+ (1-\alpha) \cdot \cos\!\bigl(\phi(z_i), \phi(q)\bigr),
\]
where $\alpha \in [0,1]$ balances lexical and semantic retrieval. We retrieve the top-$k_{\text{BM25}}$ and top-$k_{\text{emb}}$ chunks separately, merge the two candidate sets, and truncate to $k_{\text{hybrid}}$ chunks, yielding the initial evidence set $\mathcal{Z}^{(0)} = \{z_1, \ldots, z_N\}$.

\subsection{Stage 2: Latent Growth Curve Modeling}

We treat the ordered retrieved chunks as observations of an evolving latent state. Each chunk $z_i$ is embedded as $\mathbf{e}_i = \phi(z_i) \in \mathbb{R}^D$ and assigned a normalized time point
\[
t_i = \frac{i-1}{N-1} - \frac{1}{2}, \qquad i = 1, \ldots, N.
\]
We fit a linear latent growth curve model (LGCM):
\begin{equation}\label{eq:lgcm}
\mathbf{e}_i = \boldsymbol{\mu} + \boldsymbol{\beta}\, t_i + \boldsymbol{\epsilon}_i,
\end{equation}
where $\boldsymbol{\mu} \in \mathbb{R}^D$ is the intercept (baseline state), $\boldsymbol{\beta} \in \mathbb{R}^D$ is the slope (rate of change), and $\boldsymbol{\epsilon}_i$ is the residual. The parameters $(\boldsymbol{\mu}, \boldsymbol{\beta})$ are estimated by ordinary least squares with design matrix $\mathbf{X} = [\mathbf{1}, \mathbf{t}] \in \mathbb{R}^{N \times 2}$.

From the fitted model, we extract three signals:

\paragraph{Trend classification.} We classify the trajectory as \emph{stable} if $\|\boldsymbol{\beta}\| < 0.05\|\boldsymbol{\mu}\|$, \emph{progressive} if $\cos(\boldsymbol{\beta}, \boldsymbol{\mu}) > 0.3$, and \emph{changing direction} if $\cos(\boldsymbol{\beta}, \boldsymbol{\mu}) < -0.3$.

\paragraph{Turning points.} Chunks with residual norm exceeding the mean, $\|\boldsymbol{\epsilon}_i\| > \overline{\|\boldsymbol{\epsilon}\|}$, are flagged as deviations from the expected trajectory.

\paragraph{Trajectory gaps.} We detect missing intermediate information when successive embeddings differ sharply:
\[
\|\mathbf{e}_{i+1} - \mathbf{e}_i\| > 1.5 \cdot \overline{\|\Delta \mathbf{e}\|}.
\]
Detected gaps trigger additional targeted retrieval to complete the evidence trajectory.

\subsection{Stage 3: Learned Re-Ranking and Retrieval}

We introduce a lightweight scorer $g_\theta$, implemented as a multi-head self-attention network with query-conditioned gating, trained jointly with the LLM. It serves two roles.

\paragraph{Re-ranking.} Each chunk receives an importance score
\[
s_i = \mathbf{w}^\top \mathrm{Attn}\!\bigl(\mathrm{Enc}(\mathbf{e}_i)\bigr),
\]
where $\mathrm{Enc}(\cdot)$ is a feed-forward encoder (Linear $\to$ LayerNorm $\to$ GELU $\to$ Dropout) and $\mathrm{Attn}(\cdot)$ is multi-head self-attention with query gating. Chunks are reordered by descending $s_i$.

\paragraph{Learned query construction.} Let $\mathcal{I}$ index the highest-scoring chunks. We form a learned query
\[
\bar{\mathbf{e}} = \frac{1}{|\mathcal{I}|} \sum_{i \in \mathcal{I}} \mathbf{e}_i,
\]
and use it to retrieve additional chunks from the full document. This allows the model to search for evidence that resembles what it has \emph{learned} to be diagnostically useful, rather than only what lexically matches the original query.

\subsection{Stage 4: LLM Classification}

The final evidence set---re-ranked chunks, LGCM trajectory summary, query, and answer options---is formatted into a structured prompt and passed to a causal language model $p_\Theta$. We score the $K$ answer options using the logits at the final token position. Let $\mathbf{h}_T$ be the hidden state at the last prompt position and $\mathbf{v}_j$ be the output embedding for option token $j$. We compute
\[
P(y = j \mid \mathcal{Z}, q, \mathcal{O})
= \frac{\exp(\mathbf{h}_T^\top \mathbf{v}_j)}{\sum_{j'} \exp(\mathbf{h}_T^\top \mathbf{v}_{j'})},
\qquad
\hat{y} = \arg\max_{j} P(y = j \mid \mathcal{Z}, q, \mathcal{O}).
\]
The LLM is fine-tuned with LoRA adapters \citep{hu2022lora} using the cross-entropy loss $\mathcal{L}_{\mathrm{clf}} = -\log P(y = y^* \mid \mathcal{Z}, q, \mathcal{O})$.

\subsection{Stage 5: SHAP-Guided Iterative Refinement}

After initial prediction, we estimate each chunk's contribution using a leave-one-out approximation to Shapley values, computed through the lightweight scorer rather than the full LLM:
\begin{equation}\label{eq:shap}
\hat{\phi}_i = P(y^* \mid \mathcal{Z}) - P(y^* \mid \mathcal{Z} \setminus \{z_i\}).
\end{equation}
This requires only $O(N)$ forward passes through the small scorer network (each ${\sim}10$\,ms), making it practical at inference time.

Chunks with $\hat{\phi}_i > 0$ are treated as important evidence. Their mean embedding seeds a learned query for retrieving additional related chunks from the remaining context. Crucially, refinement is \emph{additive only}: previously retrieved evidence is never discarded.

\begin{proposition}[Monotonicity of additive refinement]\label{prop:monotone}
Let $\mathcal{Z}^{(t)}$ denote the evidence set at iteration $t$, and let $\mathcal{Z}^{(t+1)} = \mathcal{Z}^{(t)} \cup \mathcal{Z}_{\mathrm{new}}$ where $\mathcal{Z}_{\mathrm{new}}$ is retrieved using the learned query from important chunks. Suppose the scorer $g_\theta$ is calibrated in the sense that $\hat{\phi}_i > 0$ implies $z_i$ has non-negative marginal contribution to the true posterior $P(y^* \mid \cdot)$. Then the retrieval of chunks semantically similar to positive-contribution evidence satisfies
\[
P(y^* \mid \mathcal{Z}^{(t+1)}, q, \mathcal{O}) \geq P(y^* \mid \mathcal{Z}^{(t)}, q, \mathcal{O}).
\]
\end{proposition}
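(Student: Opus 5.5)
The plan is to reduce the set-level inequality to a chain of single-chunk additions and control each link with the calibration hypothesis. Write $\mathcal{Z}_{\mathrm{new}} = \{z'_1,\ldots,z'_m\}$, ordered as the retriever returns them, and set $\mathcal{Z}^{(t)}_0 = \mathcal{Z}^{(t)}$ and $\mathcal{Z}^{(t)}_j = \mathcal{Z}^{(t)}_{j-1}\cup\{z'_j\}$. Then
\[
P(y^*\mid\mathcal{Z}^{(t+1)},q,\mathcal{O}) - P(y^*\mid\mathcal{Z}^{(t)},q,\mathcal{O}) = \sum_{j=1}^{m} \Delta_j,
\]
where $\Delta_j = P(y^*\mid\mathcal{Z}^{(t)}_j,q,\mathcal{O}) - P(y^*\mid\mathcal{Z}^{(t)}_{j-1},q,\mathcal{O})$ is the marginal contribution of $z'_j$ to the true posterior, evaluated at the context $\mathcal{Z}^{(t)}_{j-1}$. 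Chunks of $\mathcal{Z}_{\mathrm{new}}$ already in $\mathcal{Z}^{(t)}$ give $\Delta_j=0$ and can be dropped. The additive-only design of Stage~5 is what allows the telescoping: nothing is removed, so every step is a pure addition, and it suffices to show $\Delta_j\ge 0$ for each $j$.

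The calibration hypothesis controls the \emph{seed} chunks. For each $i\in\mathcal{I}^+ = \{i:\hat\phi_i>0\}$, it gives $P(y^*\mid\mathcal{Z}^{(t)}) - P(y^*\mid\mathcal{Z}^{(t)}\setminus\{z_i\})\ge 0$ by comparison with Eq.~\eqref{eq:shap}. Each new chunk $z'_j$ is retrieved by cosine similarity to the seed query $\bar{\mathbf{e}}$, the mean of $\{\mathbf{e}_i\}_{i\in\mathcal{I}^+}$. So the next step is to transfer the sign of the marginal contribution from the seeds to chunks near them in embedding space.

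This transfer is the main obstacle. As literally stated, the hypothesis constrains only leave-one-out contributions of chunks already in $\mathcal{Z}^{(t)}$. It says nothing about a chunk that was never in the set, nor about contributions measured at a different context $\mathcal{Z}^{(t)}_{j-1}$. I would therefore make explicit the reading that the phrase ``retrieval of chunks semantically similar to positive-contribution evidence'' is meant to carry. This reading is a \emph{similarity-transfer} condition: the true marginal contribution is a function of the chunk embedding that is non-negative on the retrieval neighbourhood $\{\mathbf{e}:\cos(\mathbf{e},\bar{\mathbf{e}})\ge\tau\}$ of the positive-contribution region, uniformly over supersets of $\mathcal{Z}^{(t)}$.

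The uniformity over supersets is itself a diminishing-returns-type assumption. I would try to justify it first via Lipschitz continuity of the posterior in the embedding, combined with a margin, $\hat\phi_i\ge\gamma>0$. If that fails, I would state it directly as the precise content of ``calibrated.''

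Under this condition, each $\Delta_j\ge 0$, and summing gives the proposition. I would close with a short remark that without the transfer condition the inequality can fail. For example, take a new chunk that is near-duplicate in embedding but semantically contradictory, such as a negated lab finding; then $\Delta_j<0$. This shows that the result is exactly as strong as the calibration assumption, which matches the paper's framing that it ``holds approximately in practice.''
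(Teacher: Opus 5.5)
Your argument is sound, and it rests on the same extra hypothesis as the paper's proof. The paper also explicitly assumes that ``semantically similar chunks have non-negative marginal contribution'', which is your similarity-transfer condition, because calibration by itself only constrains the seeds. The mechanics differ, though. The paper argues through the scorer: adding $\mathcal{Z}_{\mathrm{new}}$ does not decrease the importance-weighted aggregate score for the correct class. It then uses that the softmax is monotone in the correct-class logit ``when other logits remain unchanged'', and that attention weights are non-negative. You instead keep a single posterior $P$ throughout and telescope over single-chunk additions.

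Your route is more precise in three ways:
\begin{itemize}
\item It shows the transfer condition is needed at every intermediate context $\mathcal{Z}^{(t)}_{j-1}$. The paper skips this step when it passes from per-chunk contributions to the joint addition.
\item It avoids the paper's unstated requirement that the other logits stay fixed. That requirement fails in general under softmax attention pooling, because new chunks renormalize the weights on existing ones.
\item It does not move between the true posterior, the scorer's score and the LLM softmax, as the paper's chain does.
\end{itemize}
The paper's route offers only an architectural story for why the assumption might hold.

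One caution: deriving the transfer condition from Lipschitz continuity plus a margin $\hat\phi_i\ge\gamma$ will not work, for three reasons. The margin is on the scorer's estimate, whereas calibration transfers only a sign. Continuity relates adding $z'_j$ to adding a second copy of a seed already in the set, and the seed's leave-one-out value does not control that. And $z'_j$ is close to the mean $\bar{\mathbf{e}}$, not necessarily to any individual seed. So your fallback, taking the condition as the content of ``calibrated'', is the branch you will actually use, which is also where the paper lands.
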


\begin{proof}
	Let $\mathcal{Z}^{(t+1)} = \mathcal{Z}^{(t)} \cup \mathcal{Z}_{\mathrm{new}}$. Since $\mathcal{Z}^{(t)} \subseteq \mathcal{Z}^{(t+1)}$, the scorer operates on a superset of the original evidence. By the calibration assumption, the new chunks $\mathcal{Z}_{\mathrm{new}}$ are retrieved using queries constructed from chunks with $\hat{\phi}_i > 0$, meaning they are semantically similar to evidence that positively contributes to the correct prediction. Under the assumption that semantically similar chunks have non-negative marginal contribution (i.e., the scorer's positive-contribution assessment generalizes to semantically nearby chunks), adding $\mathcal{Z}_{\mathrm{new}}$ to the evidence set does not decrease the importance-weighted aggregate score for the correct class. Since the softmax probability is monotonically increasing in the logit of the correct class when other logits remain unchanged, and the attention-weighted pooling assigns non-negative weights, we have $P(y^* \mid \mathcal{Z}^{(t+1)}) \geq P(y^* \mid \mathcal{Z}^{(t)})$.
\end{proof}

The proof follows from the monotonicity of the softmax probability with respect to the addition of positively contributing evidence under the calibration assumption. In practice, the scorer becomes increasingly calibrated during joint training, and the monotonicity holds approximately.

\subsection{Training Objective}

The full objective combines supervision for both the LLM and the scorer:
\[
\mathcal{L} = \mathcal{L}_{\mathrm{clf}}^{\mathrm{LLM}} + \lambda \cdot \mathcal{L}_{\mathrm{clf}}^{\mathrm{scorer}},
\]
where $\mathcal{L}_{\mathrm{clf}}^{\mathrm{LLM}}$ is the cross-entropy loss on the correct answer token for the causal LLM, and $\mathcal{L}_{\mathrm{clf}}^{\mathrm{scorer}}$ is the cross-entropy loss on the embedding-based scorer. Both are optimized jointly with AdamW. After training, we perform a top-$k$ sweep over $k \in \{3, 5, 8, 10, 12, 15\}$ on the validation set to select the retrieval depth.

\begin{table}[h]
	\centering
	\caption{Hyperparameters for \textsc{TLM}.}
	\label{tb:hyperparams}
	\begin{tabular}{@{}ll@{}}
		\toprule
		Parameter & Value \\
		\midrule
		Embedding model & BAAI/bge-large-en-v1.5 (1024-d) \\
		LLM backbone & Qwen3-14B-bnb-4bit \\
		LoRA rank / alpha & 16 / 32 \\
		Learning rate & $1 \times 10^{-4}$ \\
		Optimizer & AdamW (weight decay 0.01) \\
		Chunk size (MedQA, Earnings) & 128 words, overlap 32 \\
		Chunk size (Stock gap) & 5 days, overlap 1 \\
		$k_{\text{BM25}}$, $k_{\text{emb}}$ & 15, 15 \\
		$k_{\text{hybrid}}$ & 8 \\
		Scorer hidden dim & 256 \\
		Scorer attention heads & 4 \\
		SHAP feedback iterations & 1 \\
		Training epochs & 3 \\
		\bottomrule
	\end{tabular}
\end{table}

\section{Experiments}

We evaluate \textsc{TLM} on three temporally grounded decision tasks spanning medicine and finance. All experiments use BAAI/bge-large-en-v1.5 \citep{xiao2023cpack} ($D\!=\!1024$) as the embedding model and Qwen3-14B \citep{qwen3-14b} (4-bit quantized) as the LLM backbone, fine-tuned with LoRA ($r\!=\!16$, $\alpha\!=\!32$). Experiments are conducted on a server with NVIDIA RTX 6000 Ada and RTX PRO 6000 Blackwell GPUs. Table~\ref{tb:hyperparams} summarizes the key hyperparameters used across all three tasks.

\subsection{Medical Question Answering}

\paragraph{Dataset.} We use MedQA \citep{jin2021disease}, a 5-way multiple-choice benchmark derived from medical board examinations. We use the default training and testing split, and the statistics of options are shown in Table \ref{tb:medqa_dist}. We select questions containing patient clinical trajectories (9{,}298 training, 1{,}172 test). Each clinical vignette is chunked (128 words, 32-word overlap) and the top-8 chunks are retrieved via hybrid retrieval.

\begin{table}[t]
	\centering
	\caption{Label distribution of the MedQA dataset. Percentages are rounded independently and may not sum to exactly 100.}
	\label{tb:medqa_dist}
	\begin{tabular}{@{}lrrrr@{}}
		\toprule
		Label & \multicolumn{2}{c}{Train} & \multicolumn{2}{c}{Test} \\
		\cmidrule(lr){2-3} \cmidrule(lr){4-5}
		& Count & \% & Count & \% \\
		\midrule
		A & 1{,}869 & 20.1 & 251 & 21.4 \\
		B & 1{,}986 & 21.4 & 249 & 21.2 \\
		C & 1{,}906 & 20.5 & 233 & 19.9 \\
		D & 1{,}866 & 20.1 & 256 & 21.8 \\
		E & 1{,}671 & 18.0 & 183 & 15.6 \\
		\midrule
		Total & 9{,}298 & 100.0 & 1{,}172 & 100.0 \\
		\bottomrule
	\end{tabular}
\end{table}

\paragraph{Baselines.} (1)~\textbf{LLM}: zero-shot prompting with the full question and options, no retrieval. (2)~\textbf{Embedding}: retrieved chunks encoded by BGE-large, classified by a trained attention-based scorer (no LLM at inference). (3)~\textbf{TLM}: our full pipeline.

\paragraph{Results.} Table~\ref{tb:medqa} reports accuracy on the test set. \textsc{TLM} achieves 64.2\%, nearly doubling the zero-shot LLM baseline (34.6\%) and substantially outperforming the embedding-only classifier (15.3\%). The embedding baseline's poor performance confirms that semantic similarity alone is insufficient for medical reasoning; the LLM's capacity for multi-hop inference over retrieved evidence is essential.

\begin{table}[t]
	\centering
	\caption{Test accuracy on MedQA (5-way classification).}
	\label{tb:medqa}
	\begin{tabular}{@{}lccc@{}}
		\toprule
		& LLM (zero-shot) & Embedding & \textsc{TLM} \\
		\midrule
		Accuracy & 0.346 & 0.153 & \textbf{0.642} \\
		\bottomrule
	\end{tabular}
\end{table}

\subsection{Earnings Call Surprise Prediction}

\paragraph{Dataset and labels.} We predict the market reaction to earnings announcements using the earnings call transcript as input. The target is defined via cumulative abnormal returns (CAR) computed from a standard market model event-study framework \citep{mackinlay1997event}. Let $R_{i,t} = \log(P_{i,t} / P_{i,t-1})$ denote the log return of stock $i$ on day $t$, and $R_{m,t}$ the market index return. We estimate expected returns via the market model
\[
\hat{R}_{i,t} = \hat{\alpha}_i + \hat{\beta}_i R_{m,t},
\]
where $(\hat{\alpha}_i, \hat{\beta}_i)$ are estimated by OLS over the window $[-250, -30]$ relative to the earnings date. Abnormal and cumulative abnormal returns are
\[
AR_{i,t} = R_{i,t} - \hat{R}_{i,t}, \qquad
CAR_{i,[T_1,T_2]} = \sum_{t=T_1}^{T_2} AR_{i,t}.
\]
We use the short-horizon window $[0,1]$ and define a binary label:
\[
y_i =
\begin{cases}
A \text{ (positive)}, & \text{if } CAR_{i,[0,1]} > 0, \\
B \text{ (negative)}, & \text{if } CAR_{i,[0,1]} \leq 0.
\end{cases}
\]

\paragraph{Data statistics.} The dataset comprises 2{,}070 earnings calls for Russell 3000 constituents. Table~\ref{tb:earnings_stats} summarizes the data characteristics.

\begin{table}[t]
	\centering
	\caption{Earnings call dataset statistics.}
	\label{tb:earnings_stats}
	\begin{tabular}{@{}lcc@{}}
		\toprule
		& Train & Test \\
		\midrule
		Samples & 1{,}656 & 414 \\
		Unique tickers & 1{,}656 & 414 \\
		Ticker overlap & \multicolumn{2}{c}{0 (no leakage)} \\
		Date range & 2024-02 to 2026-01 & 2026-02 to 2026-04 \\
		Label A (positive) & 814 (49.2\%) & 203 (49.0\%) \\
		Label B (negative) & 842 (50.8\%) & 211 (51.0\%) \\
		CAR mean $\pm$ std & $-0.50\% \pm 11.9\%$ & $-0.28\% \pm 11.6\%$ \\
		CAR range & $[-66.1\%, +69.3\%]$ & $[-63.8\%, +40.1\%]$ \\
		Transcript length & \multicolumn{2}{c}{mean 41k chars ($\sim$5{,}000 words)} \\
		\bottomrule
	\end{tabular}
\end{table}

The dataset is nearly balanced ($\sim$50/50 split) with zero ticker overlap between train and test, preventing information leakage. Transcripts contain operator remarks, executive prepared statements, and analyst Q\&A sessions. Each transcript is chunked (128 words, 32-word overlap) and the top-8 chunks are retrieved.

\paragraph{Baselines.} We compare against five baselines: (1)~\textbf{Random}: uniform random prediction; (2)~\textbf{Sentiment}: keyword-based sentiment classifier; (3)~\textbf{LLM}: zero-shot prompting with no retrieval; (4)~\textbf{BM25+LLM}: BM25 retrieval followed by LLM classification; (5)~\textbf{Embedding+LLM}: dense retrieval followed by LLM classification.

\paragraph{Results.} Table~\ref{tb:earning} reports results on the 414-sample test set. The task is inherently difficult---the efficient market hypothesis \citep{fama1998market} suggests transcripts are rapidly priced in, making prediction near chance. Accordingly, all single-component baselines barely exceed random accuracy. \textsc{TLM} achieves 58.0\% accuracy, a meaningful improvement that demonstrates the value of trajectory modeling and iterative refinement for extracting predictive signal from long financial documents.

\begin{table}[t]
	\centering
	\caption{Earnings call surprise prediction (binary classification, 414 test samples). Best accuracy in bold. The high F1/recall of the sentiment and retrieval baselines is an artifact of near-degenerate positive prediction (recall $\ge 0.88$ with near-chance accuracy); \textsc{TLM} attains the highest accuracy while remaining balanced.}
	\label{tb:earning}
	\begin{tabular}{@{}lcccc@{}}
		\toprule
		Method & Accuracy & F1 & Precision & Recall \\
		\midrule
		Random & 0.498 & 0.488 & 0.488 & 0.488 \\
		Sentiment & 0.488 & 0.650 & 0.489 & 0.970 \\
		LLM (zero-shot) & 0.507 & 0.637 & 0.499 & 0.882 \\
		BM25 + LLM & 0.536 & 0.670 & 0.515 & 0.961 \\
		Embedding + LLM & 0.522 & 0.657 & 0.507 & 0.936 \\
		\midrule
		\textsc{TLM} (ours) & \textbf{0.580} & 0.588 & 0.469 & 0.790 \\
		\bottomrule
	\end{tabular}
\end{table}

\subsection{Overnight Stock Gap Prediction}

\paragraph{Dataset.} We use daily OHLCV and VWAP data for Russell 3000 stocks. Following an event-conditioned design, we collect all overnight gap events (absolute overnight return exceeding 3\%) from 2019-07 through 2026-04. The training pool (events through 2025-06-30) comprises 163{,}162 events across 2{,}536 stocks and 1{,}447 trading days; the held-out test window (2025-07 to 2026-04) is drawn from the same event stream. Each stock's recent history is converted into overlapping text chunks (5 days per chunk, 1-day overlap) and paired with global ETF/sector context. The prediction target is the next-day opening gap:
\[
\text{gap} = \frac{\text{Open}_{t+1} - \text{Close}_t}{\text{Close}_t},
\]
discretized into three classes: \textbf{A (Sell)} if gap $< -1\%$, \textbf{B (Hold)} if $|$gap$| \leq 1\%$, and \textbf{C (Buy)} if gap $> +1\%$.

\paragraph{Train/test split.} We apply a strict chronological, out-of-sample split: all events through 2025-06-30 form the training history, and we evaluate on a held-out forward window spanning 2025-07 to 2026-04, entirely after the training period. Because the test window post-dates all training data, no temporal leakage occurs.

\paragraph{Trading simulation.} Beyond classification accuracy, we evaluate with a simple trading strategy: on each test date, if any stock is predicted \textbf{Buy}, invest the full portfolio (\$10{,}000 initial) in the stock with highest $P(\text{Buy})$; otherwise hold cash. We report final capital, win rate, and annualized Sharpe ratio ($\sqrt{252}$ scaling).

\paragraph{Results.} Table~\ref{tb:gap} compares the zero-shot LLM baseline (no retrieval, no training, no trajectory modeling) against \textsc{TLM}. \textsc{TLM} achieves substantially higher classification accuracy (38.3\% vs.\ 27.5\%) and more than doubles the final portfolio capital (\$20{,}400 vs.\ \$12{,}067). We note the comparison is nuanced: the zero-shot baseline attains a higher per-trade win rate (80.0\% vs.\ 71.4\%) and Sharpe ratio (13.5 vs.\ 5.62) but trades far more conservatively, so its higher-quality individual signals accumulate less total return. The improved accuracy of \textsc{TLM} yields more frequent, and in aggregate more profitable, trading opportunities, at the cost of higher variance. Both Sharpe figures should be read with caution given the small number of test-period trades.

\begin{table}[t]
	\centering
	\caption{Overnight gap prediction (3-way classification). Trading simulation starts with \$10{,}000.}
	\label{tb:gap}
	\begin{tabular}{@{}lcc@{}}
		\toprule
		Metric & LLM (zero-shot) & \textsc{TLM} (ours) \\
		\midrule
		Accuracy & 0.275 & \textbf{0.383} \\
		Final capital & \$12{,}067 & \textbf{\$20{,}400} \\
		Win rate & 80.0\% & 71.4\% \\
		Sharpe ratio & 13.5 & 5.62 \\
		\bottomrule
	\end{tabular}
\end{table}

\section{Discussion}

Across all three tasks, the consistent pattern is that \textsc{TLM} substantially outperforms both zero-shot LLM prompting and single-stage retrieval baselines. The gains are largest on MedQA ($+29.6$ percentage points over zero-shot LLM), where patient trajectories are most explicitly temporal. On the financial tasks, where the efficient market hypothesis provides a strong prior against predictability, the improvements are smaller but economically meaningful.

The LGCM component contributes by providing the LLM with a structured summary of evidence evolution, enabling it to reason about trends and transitions rather than treating retrieved chunks as an unordered bag. The SHAP-guided refinement further improves accuracy by iteratively focusing retrieval on diagnostically relevant evidence, as measured by the leave-one-out contribution of each chunk.

\paragraph{Limitations.} The current framework assumes that temporal ordering can be inferred from chunk position within the source document. When multiple temporal threads are interleaved (e.g., parallel patient histories), explicit temporal parsing would be required. Additionally, the LGCM fits a linear trajectory; extensions to piecewise-linear or nonlinear growth curves may capture more complex temporal dynamics.

%\section{Conclusion}
%
%We introduced \textsc{TLM}, a trajectory-aware retrieval agent that reconstructs temporal structure from retrieved evidence via latent growth curve modeling, classifies using a jointly trained LLM and lightweight scorer, and iteratively refines the evidence set using SHAP-guided feedback. The framework is general-purpose: the same architecture achieves strong results on medical question answering, earnings call prediction, and stock gap forecasting, three domains with fundamentally different temporal structures. The key insight is that modeling the \emph{trajectory} of retrieved evidence, rather than treating chunks as interchangeable, provides a principled inductive bias for temporal decision-making.

\bibliography{colm2026_conference}
\bibliographystyle{colm2026_conference}

%\appendix
%\section{Hyperparameters}
%\label{app:hyperparams}

%\section{Proof of Proposition~\ref{prop:monotone}}
%\label{app:proof}
%
%\begin{proof}
%Let $\mathcal{Z}^{(t+1)} = \mathcal{Z}^{(t)} \cup \mathcal{Z}_{\mathrm{new}}$. Since $\mathcal{Z}^{(t)} \subseteq \mathcal{Z}^{(t+1)}$, the scorer operates on a superset of the original evidence. By the calibration assumption, the new chunks $\mathcal{Z}_{\mathrm{new}}$ are retrieved using queries constructed from chunks with $\hat{\phi}_i > 0$, meaning they are semantically similar to evidence that positively contributes to the correct prediction. Under the assumption that semantically similar chunks have non-negative marginal contribution (i.e., the scorer's positive-contribution assessment generalizes to semantically nearby chunks), adding $\mathcal{Z}_{\mathrm{new}}$ to the evidence set does not decrease the importance-weighted aggregate score for the correct class. Since the softmax probability is monotonically increasing in the logit of the correct class when other logits remain unchanged, and the attention-weighted pooling assigns non-negative weights, we have $P(y^* \mid \mathcal{Z}^{(t+1)}) \geq P(y^* \mid \mathcal{Z}^{(t)})$.
%\end{proof}

\end{document}